\def\BibTeX{{\rm B\kern-.05em{\sc i\kern-.025em b}\kern-.08em
    T\kern-.1667em\lower.7ex\hbox{E}\kern-.125emX}}
\let\OldTexttrademark\texttrademark
\renewcommand{\texttrademark}{\OldTexttrademark\xspace }%
\newcommand{\email}[1]{\href{mailto:#1}{#1}}
\newcommand{\overbar}[1]{\mkern 1.5mu\overline{\mkern-1.5mu#1\mkern-1.5mu}\mkern 1.5mu}
\newtheorem{theorem}{Theorem}
\begin{document}

% \TODO

% \title{Modeling and Control of Underactuated Soft Robots with Coupled Elasticity}
\title{Modeling and Control of Intrinsically Elasticity Coupled Soft-Rigid Robots}
% todo 

\author{Zach J. Patterson$^{1}$, Cosimo Della Santina$^{2,3}$, and Daniela Rus$^{1}$% <-this % stops a space
%\thanks{*This work was not supported by any organization}% <-this % stops a space
\thanks{$^{1}$ Computer Science and Artificial Intelligence Laboratory, MIT. \email{zpatt@mit.edu}, \email{rus@csail.mit.edu}}%
\thanks{$^{2}$ Department of Cognitive Robotics, Delft University of Technology. \email{cosimodellasantina@gmail.com}}%
\thanks{$^{3}$ Institute of Robotics and Mechatronics, German Aerospace Center (DLR).}
% \thanks{*This work has been submitted to the IEEE for possible publication. Copyright may be transferred without notice, after which this version may no longer be accessible}%
}

\maketitle

\begin{abstract}
While much work has been done recently in the realm of model-based control of soft robots and soft-rigid hybrids, most works examine robots that have an inherently serial structure. While these systems have been prevalent in the literature, there is an increasing trend toward designing soft-rigid hybrids with intrinsically coupled elasticity between various degrees of freedom. In this work, we seek to address the issues of modeling and controlling such structures, particularly when underactuated. We introduce several simple models for elastic coupling, typical of those seen in these systems. We then propose a controller that compensates for the elasticity, and we prove its stability with Lyapunov methods without relying on the elastic dominance assumption. This controller is applicable to the general class of underactuated soft robots. After evaluating the controller in simulated cases, we then develop a simple hardware platform to evaluate both the models and the controller. Finally, using the hardware, we demonstrate a novel use case for underactuated, elastically coupled systems in "sensorless" force control. 
\end{abstract}

%%%%%%%%%%%%%%%%%%%%%%%%%%%%%%%%%%%%%%%%%%%%%%%%%%%%%%%
%%%%%%%%%%%%%%%%%%% Introduction %%%%%%%%%%%%%%%%%%%%%%
%%%%%%%%%%%%%%%%%%%%%%%%%%%%%%%%%%%%%%%%%%%%%%%%%%%%%%%
\section{Introduction}

% We are interested in developing robots that can transition between soft and rigid states.

% In recent years, the field of model-based soft robot control has grown towards greater technical maturity and generality as the well known methods for robust rigid robot control have been straightforwardly adapted for soft robotics and soft-rigid hybrid robots \cite{dellasantinaModelBasedControlSoft2023}. This includes cases of full actuation \cite{dellasantinaModelbasedDynamicFeedback2020a,katzschmannDynamic2019}. 
In recent years, interest has grown in the development of soft-rigid hybrid robots \cite{bernSimulation2022,zhuSoftRigid2023,zhangGeometric2020,coevoetPlanning2022}. Modeling and control of these robots are, for the most part, easily extended from standard methods of soft robot model-based control \cite{dellasantinaModelbasedDynamicFeedback2020a,katzschmannDynamic2019,albu-schafferUnifiedPassivitybasedControl2007} thanks to their straightforward, serial structure and the reasonable approximation of full actuation. However, thanks to our ongoing work in biomimetic robots, we have recently become interested in a special case of soft-rigid hybrids, for which the structure is parallel. This induces intrinsic elastic coupling across degrees of freedom (DOFs), regardless of the choice of state variables or collocation of the actuators. Examples include cases with branches of articulated links embedded in a silicone matrix (see Fig. \ref{fig:example}a), fingers (Fig. \ref{fig:example}b) \cite{raoAnalyzing2017,valero-cuevasTendon2007, armaniniDiscrete2021}, energy storage mechanisms in legged robots \cite{grimmerComparison2012}, and the wings and flippers of flying and swimming animals (Fig. \ref{fig:example}c) \cite{konowSpring2015}. 

We propose to refer to these structures as \textit{intrinsically elastically coupled}. Indeed, elastic couplings are a common feature of reduced order models of soft robots \cite{armanini2023soft}, but it usually arises due to a combination of a selection of spacially intertwined base functions and non-collocation of actuators \cite{pustina2023collocated}.
On the contrary, elastic coupling is intrinsically present in these soft-rigid systems, and it is not artificially introduced by the model order reduction technique.
A first contribution of this work is to make a first step towards mathematically characterizing these systems, and proposing simple yet effective modeling strategies.

% We note that in the literature, "parallel" elasticity is often used to refer to the special case. We adopt that terminology throughout this article and we use "coupled" to refer to the more general class. While the models and hardware that we examine in this work are members of parallel class, the controllers are applicable to any underactuated system with coupled elasticity. 

\begin{figure}[t]
\centering
\includegraphics[width=0.45\textwidth]{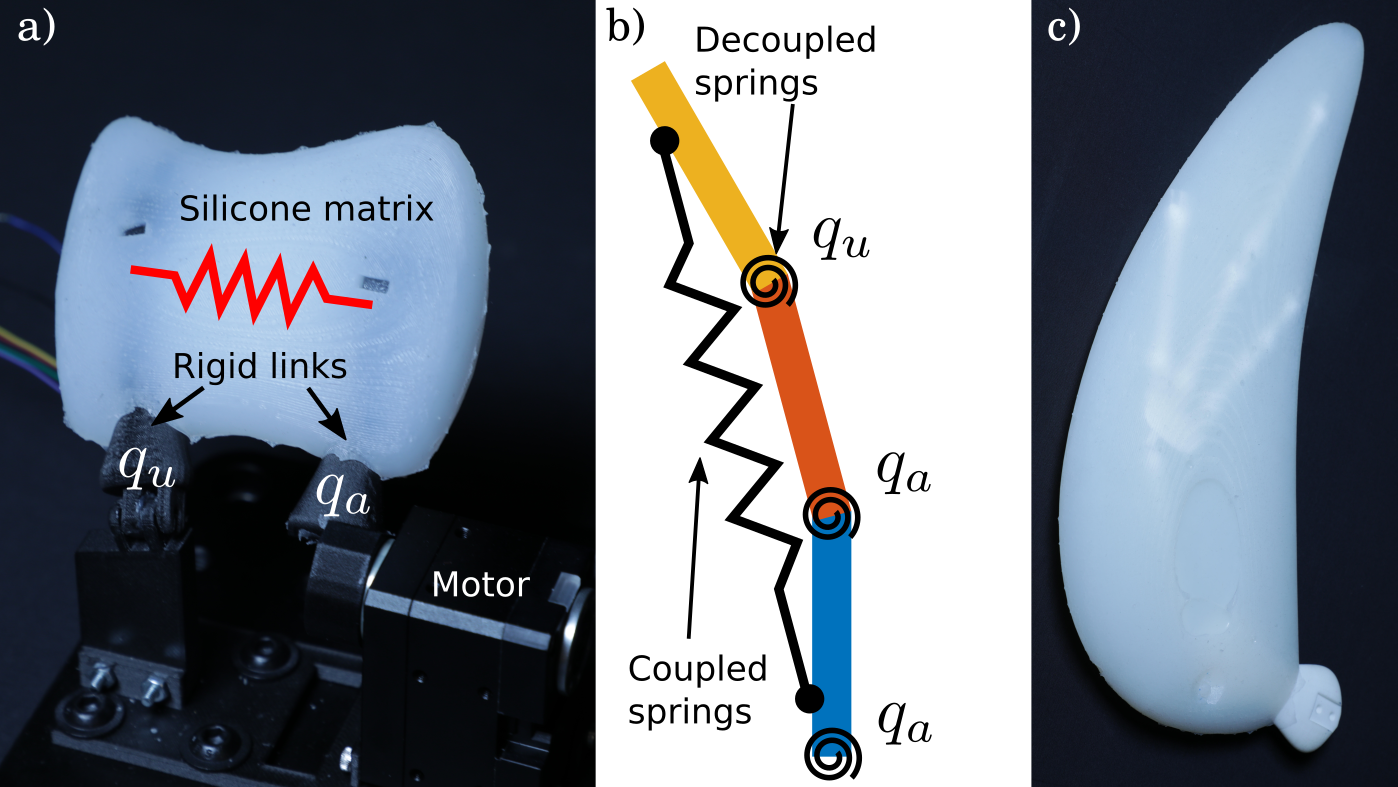}
\caption{Three examples of systems with coupled stiffness. a) Simple hardware implementation inspired by flippers. Two rigid 3D printed links are embedded in a silicone matrix. b) A "finger" with torsional springs on all joints and a coupling between the second and third joint. $q_{\mathrm{a}}$ denotes actuated degrees of freedom (DOF) and $q_{\mathrm{u}}$ denotes unactuated. c) Flipper inspired structure with a rigid "skeleton" embedded in a soft flipper}\label{fig:example}
\end{figure}

%Despite the prevalence of such parallel elastic structures in the natural world, they have largely been neglected as a focus in the soft robots' modeling and model-based control literature. Notable exceptions include \cite{niehuesCompliance2015,raoAnalyzing2017,raoAnalyzing2018}, who study some of the control properties of parallel elastic systems from the perspective of manipulation with dextrous fingers. Even with the sparse literature on tendon-driven elastic coupling, there are several notable demonstrations of the potential for such systems to improve "embodied intelligence" \cite{iidaTimescales2023}. 
Despite their prevalence in nature, parallel rigid-soft structures are often overlooked in soft robotics modeling and control literature. Notable exceptions include \cite{niehuesCompliance2015,raoAnalyzing2017,raoAnalyzing2018}, which focus on control properties in dexterous manipulation. Despite the sparse literature, these systems have been recognized to promise to enhance "embodied intelligence" \cite{iidaTimescales2023}.

%We are particularly interested in the properties and control of such structures when they are \textit{underactuated}. Regarding the shape regulation of the broader class of underactuated elastically coupled systems (which is the general class of soft robots including, e.g., \cite{rendaGeometric2020,dellasantinaSoft2020}), several works have begun to address the problem. In \cite{boyerMacrocontinuous2006}, a regulator is proposed that achieves good performance with no proof of stability. The authors in \cite{dellasantinaModelBasedControlSoft2023} provide a PD controller with feedforward stiffness and gravity cancellation and prove its stability in the case of elastic dominance over gravity (the Hessian of the elastic energy is greater than that of gravity). This controller is extended in \cite{borjaEnergybased2022}, and stability is proven under the same assumptions. Finally, \cite{pustinaFeedbackRegulationElastically2022} proposed a regulator and proof that does not rely on elastic dominance for stability, but it was restricted to the elastically \textit{decoupled} case. 

Most control works have been done under full actuation hypotheses \cite{della2020model,cao2021model,doroudchi2021configuration,weerakoon2022passivity,shao2023model,zheng2023task}. However, the soft-rigid robots we are concerned with here cannot usually be modeled under this assumption.
%
%We are particularly interested in the properties and control of such structures when they are \textit{underactuated}. 
%
Several works have focused on model-based control of soft robots in their physically accurate underactuated regime. For example, \cite{boyerMacrocontinuous2006} proposes a regulator with good performance but no stability proof, and \cite{wuFEMBased2021,thieffryReduced2018,liEquivalentInputDisturbanceBased2022}, which study the problem in the linear regime. In \cite{dellasantinaModelBasedControlSoft2023}, a PD controller with feedforward stiffness and gravity cancellation is provided, with proven stability under conditions of elastic dominance. This is extended by \cite{borjaEnergybased2022}, confirming stability under the same conditions and introducing a partial gravity compensation mechanism to improve performance. Lastly, \cite{pustinaFeedbackRegulationElastically2022} offers a PD with gravity compensation with stability proof that doesn't rely on elastic dominance but is limited to elastically decoupled systems. In this article, we extend the previous works and propose a controller and proof for underactuated soft robots in general that \textit{do not} rely on elastic dominance for stability. 

We present, therefore, the following contributions:
\begin{compactitem}
    \item Simple models for this new class of intrinsically coupled parallel soft-rigid hybrid robots;
    \item A provably stable PD controller with gravity compensation that can control the collocated configuration variables of these systems\footnote{It is worth noting that this strategy can be seamlessly applied to all kinds of soft robots with or without the presence of (intrinsically) coupled stiffness.};
    \item A sensorless control methodology utilizing the elastic coupling to perform force control;
    \item A simple yet representative of the general challenges testbed;
    \item Simulations and hardware experiments for validation.
\end{compactitem}

%%%%%%%%%%%%%%%%%%%%%%%%%%%%%%%%%%%%%%%%%%%%%%%%%%%%%%%
%%%%%%%%%%%%%%%%%%% Models %%%%%%%%%%%%%%%%%%%%%%%%%%%%
%%%%%%%%%%%%%%%%%%%%%%%%%%%%%%%%%%%%%%%%%%%%%%%%%%%%%%%

\section{Modeling Parallel Elasticity}\label{sec:model}
We propose four models for soft-rigid hybrid robots like Fig. \ref{fig:example}a. See Fig. \ref{fig:models} for graphical depictions. 
\subsection{Linear Coupling}\label{subsec:linear}
In the linear case, coupled joints described by generalized coordinates $q_1$ and $q_2$ have an energy described by 
\begin{equation}
    U_{\mathrm{linear}} = \frac{1}{2} k (q_1 - q_2)^2.
\end{equation}
This results in linear forces $F = k(q_1 - q_2)$ on joint 1 and $F = k(q_2 - q_1)$ on joint 2. Note that the stiffness coupling induces both a coupled and a decoupled component. 

\subsection{Distance-based coupling}
We will now discuss two similar nonlinear models of coupled stiffness. Each parameterizes a segment by an arc-length parameter, $s \in [0,1]$. A point along segment $i$, $p_i(q,s)$, is obtained by solving the forward kinematics. For coupled segments the energy is then modeled by the integral of the distance between the points $p(q,s)$ of each segment:
\begin{equation}
    U_{\mathrm{distance}} = \int_{0}^{1} \|(p_1 - p_2)\|^2 \,ds.
\end{equation}
The resulting forces are then $F = \frac{\partial U}{\partial q}$.

\subsection{Rejection-based coupling}
The energy in this case is described by the magnitude of the rejections along $s$ of $p_i(q,s)$ for each segment. The rejection of $p_1$ on $p_2$ is calculated as 

\begin{equation}
    r_1 = p_1 - \frac{p_1 \cdot p_2}{p_2 \cdot p_2}p_2.
\end{equation}
The rejection-based energy is then taken to be
\begin{equation}
    U_{\mathrm{rej}} = \int_{0}^{1}\|r_1\|^2 + \|r_2\|^2 \,ds.
\end{equation}

\subsection{Neo-Hookean Model}
This candidate model uses the standard continuum mechanics solution for a hyperelastic solid undergoing shear. We simply must calculate the deformation that contributes to shear strain across the coupling material. The strain energy density for a Neo-Hookean solid under shear is 
\begin{equation}
    U_{\mathrm{nh}} = k (\lambda^2 + \lambda^{-2} - 2)
\end{equation}
and the resultant forces are $F = k(\lambda - \lambda^{-3})$, where $\lambda$ is the principle shear stretch calculated for the particular situation.

\begin{figure}[t]
\centering
\includegraphics[width=0.5\textwidth]{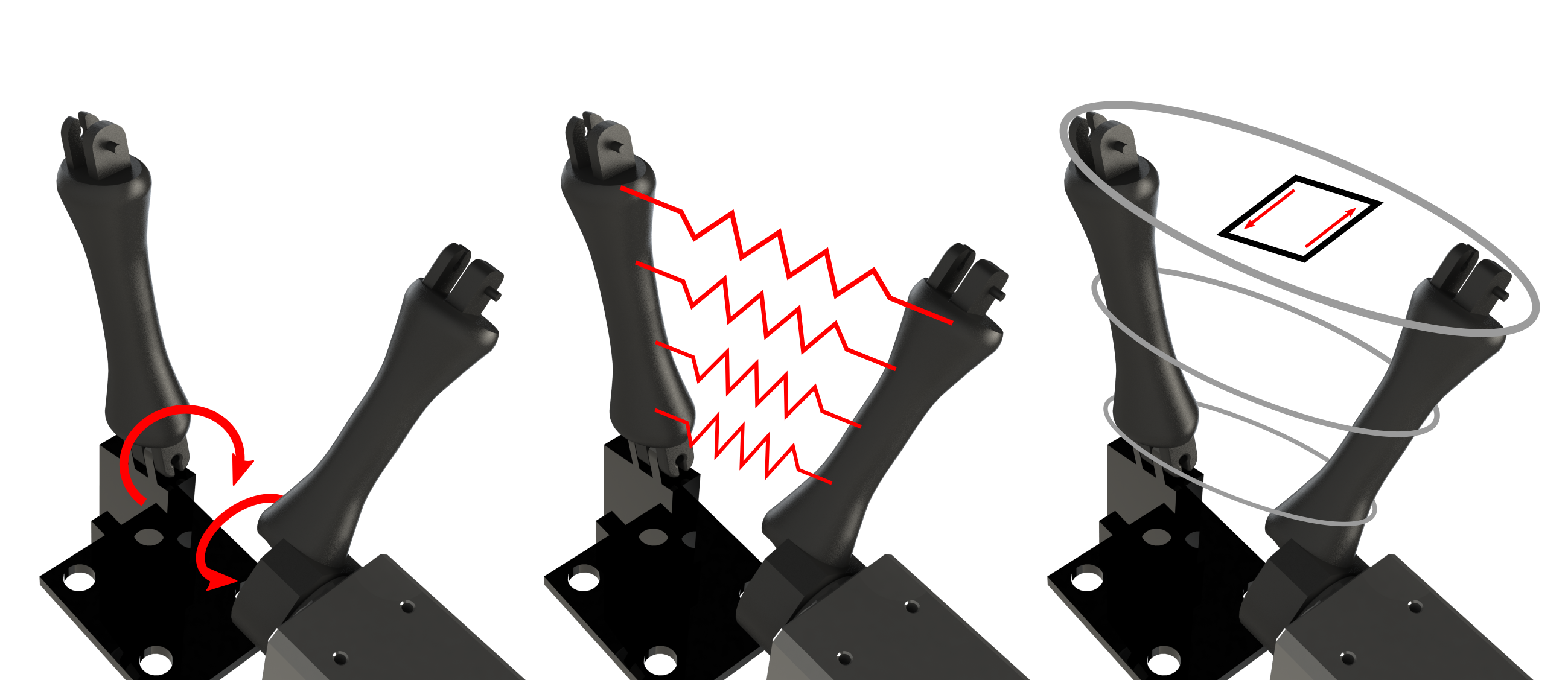}
\caption{Depictions of the qualitative aspects of the models. Left: Linear model. Middle: Distance and Rejection models. Right: Neo-Hookean shear model.}\label{fig:models}
\end{figure}

\subsection{Underactuated Dynamics}
General soft robot dynamics can be modeled using the standard robot manipulator ODEs, with additional terms to account for the elasticity and dissipation:
\begin{equation}\label{eq:dynamics}
    M(q)\ddot{q} + C(q,\dot{q}) + G(q) + F_K(q) + D(q)\dot{q} = A \tau,
\end{equation}
where $q \in \mathbb{R}^n$, $M$ is the robot inertia matrix, $C$ is the Coriolis matrix, $G$ is gravity, $F_K$ is the force from the elastic potential, $D$ is the damping matrix, and $A$ transforms actuator torques $\tau \in \mathbb{R}^m$ into configuration space. 

For an underactuated version of (\ref{eq:dynamics}), we can rewrite our system dynamics in terms of actuated, $q_{\mathrm{a}} \in \mathbb{R}^m$, and unactuated, $q_{\mathrm{u}} \in \mathbb{R}^{n-m}$, state variables as follows

\begin{multline}\label{eq:underactuated}
    \begin{pmatrix}
    M_{\mathrm{aa}} & M_{\mathrm{au}}\\
    M_{\mathrm{ua}} & M_{\mathrm{uu}}
    \end{pmatrix}
    \begin{pmatrix}
        \ddot{q}_{\mathrm{a}} \\
        \ddot{q}_{\mathrm{u}}
    \end{pmatrix} +
    \begin{pmatrix}
    C_{\mathrm{aa}} & C_{\mathrm{au}}\\
    C_{\mathrm{ua}} & C_{\mathrm{uu}}
    \end{pmatrix}
    \begin{pmatrix}
        \dot{q}_{\mathrm{a}} \\
        \dot{q}_{\mathrm{u}}
    \end{pmatrix} +
    \begin{pmatrix}
        G_{\mathrm{a}} \\
        G_{\mathrm{u}}
    \end{pmatrix} \\
    +\begin{pmatrix}
    F_{K,\mathrm{aa}} & F_{K,\mathrm{au}}\\
    F_{K,\mathrm{ua}} & F_{K,\mathrm{uu}}
    \end{pmatrix} + 
    \begin{pmatrix}
    D_{\mathrm{aa}} & D_{\mathrm{au}}\\
    D_{\mathrm{ua}} & D_{\mathrm{uu}}
    \end{pmatrix}
    \begin{pmatrix}
        \dot{q}_{\mathrm{a}} \\
        \dot{q}_{\mathrm{u}}
    \end{pmatrix} =
    \begin{pmatrix}
        A_{\mathrm{a}} \tau \\
        0
    \end{pmatrix}.
\end{multline}
The above expression generalizes the elastically decoupled class of soft robots previously discussed in \cite{pustinaFeedbackRegulationElastically2022} to the case of elastically coupled soft robots, and the coupled case collapses to decoupled when $F_{K,\mathrm{au}} = F_{K,\mathrm{ua}} = 0$. We note that for the case of linear elasticity, 
\begin{equation}
F_K(q) = \begin{pmatrix}
    F_{K,\mathrm{a}}\\
    F_{K,\mathrm{u}}
    \end{pmatrix} = 
    \begin{pmatrix}
    K_{\mathrm{aa}} & K_{\mathrm{au}}\\
    K_{\mathrm{ua}} & K_{\mathrm{uu}}
    \end{pmatrix}
    \begin{pmatrix}
    q_{\mathrm{a}}\\
    q_{\mathrm{u}}
    \end{pmatrix}
    =K q
\end{equation}
\subsection{System Properties}
System (\ref{eq:dynamics}) exhibits several useful features common to robots with dynamics in this form \cite{dellasantinaModelBasedControlSoft2023,murrayMathematicalIntroductionRobotic1994}. First, the inertia matrix is symmetric, positive definite, and bounded in $q$. Second, the Coriolis matrix $C(q,\dot q)$ has the property that there exists a scalar $\gamma_C > 0$ such that $||C(q,\dot q)|| < \gamma_C ||\dot q||$ for any $q,\dot q \in \mathbb{R}^n$. Third, the gravitational potential, force, and Hessian are all bounded. Thus there exist constants $\gamma_{U_G}$, $\gamma_{G}$, $\gamma_{\partial G}$ such that for any $q \in \mathbb{R}^n$,
\begin{equation*}
    ||U_G(q)|| \leq \gamma_{U_G}, \quad ||G(q)|| \leq \gamma_{G}, \quad ||\frac{\partial G(q)}{\partial q}|| \leq \gamma_{\partial G}.
\end{equation*}

%%%%%%%%%%%%%%%%%%%%%%%%%%%%%%%%%%%%%%%%%%%%%%%%%%%%%%%
%%%%%%%%%%%%%%%%%%% ZERO DYNAMICS %%%%%%%%%%%%%%%%%%%%%
%%%%%%%%%%%%%%%%%%%%%%%%%%%%%%%%%%%%%%%%%%%%%%%%%%%%%%%
\section{Controllers}
\subsection{Zero Dynamics}
For clarity in the coming presentations and without loss of generality, we will use the linear elasticity case. We adopt the notation $\overline q = [ \overline{q}_{\mathrm{a}} \quad q_{\mathrm{u}} ]^T$.
% \begin{equation}
%     \overline q = \begin{pmatrix} \overline{q}_{\mathrm{a}}\\ q_{\mathrm{u}} \end{pmatrix}.
% \end{equation}

We can examine the zero dynamics of system (\ref{eq:underactuated}) when $q_{\mathrm{a}}$ is forced to $\overline{q}_{\mathrm{a}}$. In this case, $\dot{q}_{\mathrm{a}}, \ddot{q}_{\mathrm{a}} = 0$ and we get
\begin{multline}\label{eq:zero_dynamics}
    M_{\mathrm{uu}}(\overline q) \ddot{q}_{\mathrm{u}} + C_{\mathrm{uu}}(\overline q,0,\dot{q}_{\mathrm{u}}) \dot{q}_{\mathrm{u}} + G_{\mathrm{u}}(q) \\
    + K_{\mathrm{ua}}\overline q_{\mathrm{a}} + K_{\mathrm{uu}} q_{\mathrm{u}}+ D_{\mathrm{uu}} \dot{q}_{\mathrm{u}} = 0.
\end{multline}
As we will show in the following, the dynamics in (\ref{eq:underactuated}) are minimum phase.
\begin{theorem}
    For any initial state and any $q_{\mathrm{a}} = \overline{q}_{\mathrm{a}}$, the trajectories of (\ref{eq:zero_dynamics}) are bounded and converge to $(q_{\mathrm{u}},\dot{q}_{\mathrm{u}}) = (q_{\mathrm{u,eq}},0)$ where $q_{\mathrm{u,eq}}$ is found by solving 
    \begin{equation}
        K_{\mathrm{ua}}\overline q_{\mathrm{a}} + K_{\mathrm{uu}} q_{\mathrm{u}} + G_{\mathrm{u}}(q) = 0.
    \end{equation}
\end{theorem}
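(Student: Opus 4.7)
The plan is to treat the zero-dynamics system (\ref{eq:zero_dynamics}) as a self-contained mechanical system in $q_{\mathrm{u}}$ and apply an energy-based LaSalle argument. With $q_{\mathrm{a}}$ frozen at $\overline{q}_{\mathrm{a}}$, equation (\ref{eq:zero_dynamics}) is precisely the Euler-Lagrange equation, augmented by Rayleigh dissipation, of the reduced Lagrangian $L(q_{\mathrm{u}},\dot q_{\mathrm{u}}) = \tfrac{1}{2}\dot q_{\mathrm{u}}^T M_{\mathrm{uu}}(\overline{q}_{\mathrm{a}},q_{\mathrm{u}})\dot q_{\mathrm{u}} - U(q_{\mathrm{u}})$, where the effective potential
\begin{equation*}
U(q_{\mathrm{u}}) = \tfrac{1}{2} q_{\mathrm{u}}^T K_{\mathrm{uu}} q_{\mathrm{u}} + q_{\mathrm{u}}^T K_{\mathrm{ua}}\overline{q}_{\mathrm{a}} + U_G(\overline{q}_{\mathrm{a}},q_{\mathrm{u}})
\end{equation*}
aggregates the elastic and gravitational contributions. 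Being itself a mechanical system, it inherits the properties needed below: $M_{\mathrm{uu}}$ is symmetric positive definite, and a Coriolis factor $C_{\mathrm{uu}}(\overline{q}_{\mathrm{a}},0,\dot q_{\mathrm{u}})$ can be chosen so that $\dot M_{\mathrm{uu}} - 2 C_{\mathrm{uu}}$ is skew-symmetric along trajectories.

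Next I would verify that $U$ is bounded below and radially unbounded. Because $K$ is the Hessian of a positive-definite quadratic elastic energy, its principal block $K_{\mathrm{uu}}$ is positive definite, and combining this with the uniform bound $\|U_G\| \leq \gamma_{U_G}$ from the listed system properties yields
\begin{equation*}
U(q_{\mathrm{u}}) \geq \tfrac{1}{2}\lambda_{\min}(K_{\mathrm{uu}})\|q_{\mathrm{u}}\|^2 - \|K_{\mathrm{ua}}\overline{q}_{\mathrm{a}}\|\,\|q_{\mathrm{u}}\| - \gamma_{U_G}.
\end{equation*}
Hence $U$ is coercive and attains a finite minimum $U_{\min}$ at some $q_{\mathrm{u,eq}}$, whose first-order optimality condition $K_{\mathrm{uu}} q_{\mathrm{u,eq}} + K_{\mathrm{ua}}\overline{q}_{\mathrm{a}} + G_{\mathrm{u}}(\overline{q}_{\mathrm{a}}, q_{\mathrm{u,eq}}) = 0$ is exactly the stated equilibrium equation. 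This step is where not invoking elastic dominance pays off: the global bound on $U_G$ suffices because the elastic energy is quadratic in $q_{\mathrm{u}}$.

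Finally, I take the Lyapunov candidate $V(q_{\mathrm{u}},\dot q_{\mathrm{u}}) = \tfrac{1}{2}\dot q_{\mathrm{u}}^T M_{\mathrm{uu}}\dot q_{\mathrm{u}} + U(q_{\mathrm{u}}) - U_{\min}$, which is nonnegative and radially unbounded on $\mathbb{R}^{2(n-m)}$. Differentiating along (\ref{eq:zero_dynamics}) and using the skew-symmetry of $\dot M_{\mathrm{uu}} - 2 C_{\mathrm{uu}}$ to cancel the inertial cross terms collapses everything to $\dot V = -\dot q_{\mathrm{u}}^T D_{\mathrm{uu}} \dot q_{\mathrm{u}} \leq 0$, so sublevel sets of $V$ are positively invariant and every trajectory is bounded. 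LaSalle's invariance principle then restricts $\omega$-limit points to the largest invariant subset of $\{\dot q_{\mathrm{u}} = 0\}$; on that subset (\ref{eq:zero_dynamics}) forces $K_{\mathrm{uu}} q_{\mathrm{u}} + K_{\mathrm{ua}}\overline{q}_{\mathrm{a}} + G_{\mathrm{u}}(\overline{q}_{\mathrm{a}}, q_{\mathrm{u}}) = 0$, i.e.\ convergence to the set of equilibria.

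I expect the main delicate point to be the inheritance of the skew-symmetry property by the reduced dynamics; framing (\ref{eq:zero_dynamics}) as the Euler-Lagrange equation of the \emph{restricted} Lagrangian sidesteps this cleanly, since the reduced Christoffel symbols automatically yield a valid $C_{\mathrm{uu}}$. A secondary nuance is that $U$ may in principle have several critical points, so convergence to a single $q_{\mathrm{u,eq}}$ (as literally stated) would require isolation or nondegeneracy of the minimizer; with only coercivity in hand, LaSalle gives convergence to the equilibrium set, which is the substantive content of the theorem.
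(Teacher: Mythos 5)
Your proposal is correct and follows essentially the same route as the paper: the total-energy Lyapunov function (kinetic plus elastic plus gravitational potential), cancellation via skew-symmetry of $\dot M_{\mathrm{uu}} - 2C_{\mathrm{uu}}$ to obtain $\dot V = -\dot q_{\mathrm{u}}^T D_{\mathrm{uu}} \dot q_{\mathrm{u}} \leq 0$, and LaSalle's invariance principle. Your treatment is in fact slightly more careful than the paper's on two points it glosses over — justifying the skew-symmetry of the reduced Coriolis factor via the restricted Lagrangian, and noting that without isolation of the minimizer LaSalle only gives convergence to the equilibrium set.
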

\begin{proof}
    We begin by specifying the Lyapunov-like function 
    \begin{equation}
        V = \frac{1}{2}\dot{q}_{\mathrm{u}}^{T}M_{\mathrm{uu}}\dot{q}_{\mathrm{u}} + \overline q^T K \overline q + U_g(q).
    \end{equation}
The gravitational potential, $U_g$, is lower bounded and the elastic potential $U_{K}$ is positive definite, implying that $V$ is lower bounded. Next, we take the time derivative of $V$, using (\ref{eq:zero_dynamics}) to solve for $\ddot{q}_{\mathrm{u}}$:
\begin{align*}
    \dot{V} & =  \dot{q}_{\mathrm{u}}^{T}M_{\mathrm{uu}}\ddot{q}_{\mathrm{u}}\! +\! \frac{1}{2}\dot{q}_{\mathrm{u}}^{T}\dot{M}_{\mathrm{uu}}\dot{q}_{\mathrm{u}}\! +\! \dot{q}_{\mathrm{u}}^{T}(K_{\mathrm{ua}}\overline q_{\mathrm{a}}\! +\! K_{\mathrm{uu}} q_{\mathrm{u}}) + \dot{q}_{\mathrm{u}}^{T}G_{\mathrm{u}}\\
    & = \frac{1}{2}\dot{q}_{\mathrm{u}}^{T}\dot{M}_{\mathrm{uu}}\dot{q}_{\mathrm{u}} + \dot{q}_{\mathrm{u}}^{T}(K_{\mathrm{ua}}\overline q_{\mathrm{a}} + K_{\mathrm{uu}} q_{\mathrm{u}}) + \dot{q}_{\mathrm{u}}^{T}G_{\mathrm{u}} \\
    & \quad + \dot{q}_{\mathrm{u}}^{T}(-C_{\mathrm{uu}}\dot{q}_{\mathrm{u}} - G_{\mathrm{u}} - K_{\mathrm{ua}}\overline q_{\mathrm{a}} - K_{\mathrm{uu}} q_{\mathrm{u}} - D_{\mathrm{uu}}\dot{q}_{\mathrm{u}})\\
    & = \frac{1}{2}\dot{q}_{\mathrm{u}}^{T}(\dot{M}_{\mathrm{uu}} - 2C_{\mathrm{uu}})\dot{q}_{\mathrm{u}} - \dot{q}_{\mathrm{u}}^T D_{\mathrm{uu}}\dot{q}_{\mathrm{u}}\\
    & = - \dot{q}_{\mathrm{u}}^T D_{\mathrm{uu}}\dot{q}_{\mathrm{u}} \leq 0.
\end{align*}
Because $V$ is radially unbounded and lower bounded, this shows that the trajectories of the unactuated states are bounded. By Lasalle's invariance principle \cite{hassan2002nonlinear} (relying on the Corollary to Lasalle discussed in the Appendix of \cite{pustinaFeedbackRegulationElastically2022}), since $\dot{V}$ is negative definite, the trajectories converge and the the proof is complete. 
\end{proof}
% , i.e. $q_{\mathrm{u}} < ||\gamma_q||$, $\dot{q}_{\mathrm{u}} < ||\gamma_D||$. However, as the zero dynamics (\ref{eq:zero_dynamics})  are time variant, we cannot invoke LaSalle's invariance principle to prove stability. Instead, we must rely on Barbalat's lemma, which states that the system is stable if $\ddot{V}$
%%%%%%%%%%%%%%%%%%%%%%%%%%%%%%%%%%%%%%%%%%%%%%%%%%%%%%%
%%%%%%%%%%%%%%%%%%% Controllers %%%%%%%%%%%%%%%%%%%%%%%
%%%%%%%%%%%%%%%%%%%%%%%%%%%%%%%%%%%%%%%%%%%%%%%%%%%%%%%
\subsection{PD Regulator}
Here we extend the PD-style regulator with online gravity cancellation to underactuated systems with coupled stiffness. The controller is proved to be asymptotically stable. The controller and especially the proof are inspired by the work of Pustina et. al \cite{pustinaFeedbackRegulationElastically2022}. The collocated controller is
\begin{equation}\label{eq:controller}
    \tau = G_{\mathrm{a}}(q) - K_{\mathrm{D}} \dot q_{\mathrm{a}} + K_{\mathrm{au}} \overline q_{u} + K_{\mathrm{aa}} \overline{q}_{\mathrm{a}} + K_{\mathrm{P}} (\overline{q}_{\mathrm{a}} - q_{\mathrm{a}}),
\end{equation}
where $\overline q_{u}$ is an arbitrary $q_{\mathrm{u}} \in \mathbb R^{n-m}$. For the following, we drop dependencies where they are clear from context.

\begin{theorem}

There exists a $K_{\mathrm{P}}$ such that if $\overbar q_{\mathrm{u}} = q_{\mathrm{u,eq}}$ then the trajectories of the closed loop system (\ref{eq:underactuated}) are bounded and converge asymptotically to $(q_{\mathrm{a}}, q_{\mathrm{u}}, \dot q_{\mathrm{a}}, \dot q_{\mathrm{u}})=(\overline{q}_{\mathrm{a}}, q_{\mathrm{u,eq}}, 0, 0)$, where $q_{\mathrm{u,eq}}$ is the solution of $G_{\mathrm{u}} + K_{\mathrm{ua}} \overline q_{a} + K_{\mathrm{uu}} \overline{q}_{\mathrm{u}} = 0$.
\end{theorem}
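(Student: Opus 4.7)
I plan to mirror the Lyapunov-plus-LaSalle approach used for Theorem~1 (and for the decoupled case in \cite{pustinaFeedbackRegulationElastically2022}), but upgrade the potential to respect the off-diagonals $K_{\mathrm{au}}, K_{\mathrm{ua}}$. First I would substitute controller (\ref{eq:controller}) into (\ref{eq:underactuated}) and change to error coordinates $\tilde q_{\mathrm{a}} = q_{\mathrm{a}} - \overline q_{\mathrm{a}}$, $\tilde q_{\mathrm{u}} = q_{\mathrm{u}} - q_{\mathrm{u,eq}}$. Using the hypothesis $\overline q_{\mathrm{u}} = q_{\mathrm{u,eq}}$ together with the equilibrium identity $G_{\mathrm{u}}(\overline q) + K_{\mathrm{ua}}\overline q_{\mathrm{a}} + K_{\mathrm{uu}} q_{\mathrm{u,eq}} = 0$, the actuated row collapses to a mass-spring equation with combined stiffness $K_{\mathrm{aa}} + K_{\mathrm{P}}$ and effective damping $D_{\mathrm{aa}} + K_{\mathrm{D}}$, while the unactuated row retains a residual gravitational mismatch $G_{\mathrm{u}}(q) - G_{\mathrm{u}}(\overline q)$ that is the only coupling left between the two subsystems.

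\textbf{Candidate Lyapunov function.} I would propose
\begin{equation*}
V = \tfrac{1}{2}\dot q^{T} M(q)\,\dot q + \tfrac{1}{2}\tilde q^{T} K\, \tilde q + \tfrac{1}{2}\tilde q_{\mathrm{a}}^{T} K_{\mathrm{P}}\,\tilde q_{\mathrm{a}} + \Psi(q),
\end{equation*}
where $\Psi(q) = U_g(q) - U_g(\overline q) - G(\overline q)^{T}(q - \overline q)$ is the shifted gravitational potential, chosen so that $\Psi(\overline q) = 0$ and $\nabla \Psi(\overline q) = 0$. The bound $\|\partial G / \partial q\| \le \gamma_{\partial G}$ yields $|\Psi(q)| \le \tfrac{1}{2}\gamma_{\partial G}\|q - \overline q\|^2$, so positive-definiteness of $V$ around the target follows once $K_{\mathrm{P}}$ is chosen large enough that, combined with the unactuated-direction stability certified by Theorem~1 (the zero-dynamics Hessian at $q_{\mathrm{u,eq}}$ is positive definite), the total position-dependent quadratic form dominates $\Psi$. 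Lower-boundedness and radial unboundedness then follow from the system properties recalled earlier in the paper.

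\textbf{Computation of $\dot V$ and LaSalle.} Along closed-loop trajectories, skew-symmetry of $\dot M - 2C$ annihilates the kinetic--Coriolis cross terms, the feedforward $K_{\mathrm{au}}\overline q_{\mathrm{u}} + K_{\mathrm{aa}}\overline q_{\mathrm{a}}$ absorbs $\dot q^{T} K\overline q$, the $K_{\mathrm{P}}$ contributions telescope exactly as in the scalar PD proof, and the online cancellation $G_{\mathrm{a}}(q)$ removes the actuated gravity drag. Tracing the bookkeeping, I expect to arrive at
\begin{equation*}
\dot V = -\dot q^{T} D\,\dot q - \dot q_{\mathrm{a}}^{T} K_{\mathrm{D}}\,\dot q_{\mathrm{a}} + \dot q_{\mathrm{a}}^{T}\bigl[G_{\mathrm{a}}(q) - G_{\mathrm{a}}(\overline q)\bigr].
\end{equation*}
Applying $\|G_{\mathrm{a}}(q) - G_{\mathrm{a}}(\overline q)\| \le \gamma_{\partial G}\|\tilde q\|$ and Young's inequality, the residual splits into a piece absorbed into $-\dot q_{\mathrm{a}}^{T} K_{\mathrm{D}}\dot q_{\mathrm{a}}$ and a quadratic in $\|\tilde q\|$. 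To kill the latter, I would tighten $V$ with a small Ortega-style position--velocity cross-term $\epsilon\, \tilde q_{\mathrm{a}}^{T} M_{\mathrm{aa}}(q)\dot q_{\mathrm{a}}$, whose time derivative contributes an additional $-\epsilon\,\tilde q_{\mathrm{a}}^{T}(K_{\mathrm{aa}} + K_{\mathrm{P}})\tilde q_{\mathrm{a}}$ that dominates the Young-residual for $K_{\mathrm{P}}$ above a computable threshold. The resulting $\dot V \le 0$ is negative semi-definite, and LaSalle's invariance principle (applied as in Theorem~1) pins the state to the largest invariant subset of $\{\dot V = 0\}$, which after substituting $\dot q \equiv 0$ and $\ddot q \equiv 0$ into the closed loop collapses to the singleton $(\overline q_{\mathrm{a}}, q_{\mathrm{u,eq}}, 0, 0)$.

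\textbf{Main obstacle.} The hardest step is the one the whole contribution hinges on: showing that a \emph{single} choice of $K_{\mathrm{P}}$ makes $V$ positive definite and $\dot V$ negative semi-definite simultaneously, without invoking elastic dominance. Because the coupling blocks $K_{\mathrm{au}}, K_{\mathrm{ua}}$ prevent a diagonal split of the Hessian of $V$ at the target, a Schur-complement argument on the $2\times 2$ block form is needed to express the threshold on $K_{\mathrm{P}}$ in terms of $K_{\mathrm{uu}}$, $\|K_{\mathrm{au}}\|$, $\gamma_{\partial G}$, and the damping eigenvalues. The same coupling resurfaces when bounding the gravity residual, so the two conditions on $K_{\mathrm{P}}$ must be checked to be mutually compatible; this is the delicate calculation that I expect will occupy most of the proof.
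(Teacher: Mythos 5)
Your overall architecture (energy-based Lyapunov function plus a position--velocity cross term, closed by LaSalle) is the right family, but two steps would fail as written. First, the residual you isolate, $\dot q_{\mathrm{a}}^{T}\bigl[G_{\mathrm{a}}(q)-G_{\mathrm{a}}(\overline q)\bigr]$, depends on the \emph{full} configuration error: after Young's inequality you are left with a term proportional to $\|\tilde q\|^{2}=\|\tilde q_{\mathrm{a}}\|^{2}+\|\tilde q_{\mathrm{u}}\|^{2}$, while your cross term $\epsilon\,\tilde q_{\mathrm{a}}^{T}M_{\mathrm{aa}}\dot q_{\mathrm{a}}$ only buys you $-\epsilon\,\tilde q_{\mathrm{a}}^{T}(K_{\mathrm{aa}}+K_{\mathrm{P}})\tilde q_{\mathrm{a}}$. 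No choice of $K_{\mathrm{P}}$ acts in the $\tilde q_{\mathrm{u}}$ direction; to dominate that piece you would need $K_{\mathrm{uu}}$ to beat the gravity Hessian, i.e.\ exactly the elastic-dominance assumption the theorem is designed to avoid. Relatedly, your positive-definiteness argument leans on "the zero-dynamics Hessian at $q_{\mathrm{u,eq}}$ is positive definite, certified by Theorem~1" --- Theorem~1 proves no such thing; it only shows convergence of the zero dynamics to an equilibrium via an invariance argument.

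Second, the unnormalized cross term cannot deliver the global claim. Differentiating $\epsilon\,\tilde q_{\mathrm{a}}^{T}M_{\mathrm{aa}}\dot q_{\mathrm{a}}$ and substituting the closed-loop $\ddot q$ produces terms of the form $\epsilon\,\tilde q_{\mathrm{a}}^{T}C\dot q\sim\|\tilde q_{\mathrm{a}}\|\,\|\dot q\|^{2}$, which are cubic and cannot be dominated by $-\dot q^{T}\hat D\dot q$ for large errors. The paper's proof sidesteps both problems at once with the saturated cross term $2e_{\mathrm{a}}^{T}(M_{\mathrm{aa}}\dot q_{\mathrm{a}}+M_{\mathrm{au}}\dot q_{\mathrm{u}})/(1+2e_{\mathrm{a}}^{T}e_{\mathrm{a}})$ and similarly normalized gravity/coupling corrections inside $V$: every position-dependent factor then enters $\dot V$ only through the bounded quantity $\|e_{\mathrm{a}}\|/\sqrt{1+2\|e_{\mathrm{a}}\|^{2}}$ and the global bounds $\gamma_{G},\gamma_{\partial G},\gamma_{C}$, so $\dot V$ is bounded by a $2\times 2$ quadratic form in $\bigl(\|\dot q\|,\ \|e_{\mathrm{a}}\|/\sqrt{1+2\|e_{\mathrm{a}}\|^{2}}\bigr)$ whose positive definiteness is secured by two Sylvester conditions fixing $\gamma_{1}$ and $\lambda_{\min}(K_{\mathrm{P}}+K_{\mathrm{aa}})$. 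Crucially, the paper never attempts negativity in $\tilde q_{\mathrm{u}}$: it concludes only $e_{\mathrm{a}}\to 0$ and $\dot q\to 0$, and lets the invariant-set analysis (the equilibrium equation for $q_{\mathrm{u}}$) deliver $q_{\mathrm{u}}\to q_{\mathrm{u,eq}}$. If you restructure your argument so that the target of negativity is $(\dot q, e_{\mathrm{a}})$ rather than the full error, and saturate the cross term, your proof becomes essentially the paper's.
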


\begin{proof}
Consider the Lyapunov-like function 
\begin{equation}
\begin{split}
    V(\widetilde{q},\dot q) &= \gamma_1 \left( \frac{1}{2}\dot q^T M(q) \dot q + \frac{1}{2}\widetilde{q}^T \hat K \widetilde{q} \right. \\
    &\left. - \frac{e_{\mathrm{a}}^T G_{\mathrm{a}}(q)}{1 + 2 e_{\mathrm{a}}^T e_{\mathrm{a}}} - \frac{e_{\mathrm{a}}^T K_{\mathrm{au}} \overline q_{u}}{{1 + 2 e_{\mathrm{a}}^T e_{\mathrm{a}}}} + U_G(q) \right) \\
    &+ \frac{2e_{\mathrm{a}}^T (M_{\mathrm{aa}}(q) \dot q_{\mathrm{a}} + M_{\mathrm{au}}(q) \dot q_{\mathrm{u}})}{1 + 2 e_{\mathrm{a}}^T e_{\mathrm{a}}},
\end{split}
\end{equation}
where $\gamma_1 > 0$ and 
\begin{equation*}
    \hat{K} = \begin{pmatrix}
    K_{\mathrm{P}} + F_{K,aa} & F_{K,au}\\
    F_{K,ua} & F_{K,uu}
    \end{pmatrix},
    \hat{D} = \begin{pmatrix}
    K_{\mathrm{D}} + D_{\mathrm{aa}} & D_{\mathrm{au}}\\
    D_{\mathrm{ua}} & D_{\mathrm{uu}}
    \end{pmatrix}.
\end{equation*}
We first show that V is lower bounded. We know from the properties of system (\ref{eq:dynamics}) that the following terms are lower bounded as $\frac{2e_{\mathrm{a}}^T (M_{\mathrm{aa}}(q) \dot q_{\mathrm{a}} + M_{\mathrm{au}}(q) \dot q_{\mathrm{u}}}{1 + 2 e_{\mathrm{a}}^T e_{\mathrm{a}}} \geq - \lambda_{\mathrm{max}}(M) ||\dot q|| $, $- \frac{e_{\mathrm{a}}^T G_{\mathrm{a}}(q)}{1 + 2 e_{\mathrm{a}}^T e_{\mathrm{a}}} \geq -\alpha_G$, $U_G(q) \geq - \alpha_{U_G}$, and $\frac{1}{2}\widetilde{q}^T \hat K \widetilde{q} - \frac{e_{\mathrm{a}}^T K_{\mathrm{au}} \overline q_{u}}{{1 + 2 e_{\mathrm{a}}^T e_{\mathrm{a}}}} > 0$. We obtain
\begin{equation}\label{eq:bnds}
    V(\widetilde{q},\dot q) \frac{\gamma_1}{2} \lambda_{\mathrm{min}} (M) ||\dot q||^2 - 2 \lambda_{\mathrm{max}}(M)||\dot q||  - \gamma_1 (\alpha_G + \alpha_{U_G}).
\end{equation}
% A sufficient condition for this function to be lower bounded is that $U_K(q) - \frac{e_{\mathrm{a}}^T F_{K,\mathrm{a}}(q)}{{1 + 2 e_{\mathrm{a}}^T e_{\mathrm{a}}}} \geq 0$. This condition is equivalent to (\ref{eq:convexity}). To see this, we can write (\ref{eq:convexity}) more explicitly by using (\ref{eq:notation}) and the positive semidefiniteness of $U_K(\overline q)$ as
% \begin{equation}
%     U_K(q) \geq U_K(\overline q) + F_K(\overline q)^T (q - \overline{q}) \geq e_{\mathrm{a}}^T F_{K,\mathrm{a}}(\overline q).
% \end{equation}
% Thus, by hypothesis, $U_K(q) - \frac{e_{\mathrm{a}}^T F_{K,\mathrm{a}}(q)}{{1 + 2 e_{\mathrm{a}}^T e_{\mathrm{a}}}} \geq 0$. We note that condition (\ref{eq:convexity}) is a convexity condition and a necessary and sufficient condition for its fulfillment is that $\frac{\partial^2U_K}{\partial q}\succeq 0$. 
The function on the right is quadratic and has a known minimum (see \cite{pustinaFeedbackRegulationElastically2022}) with value $\gamma_2$. Therefore, $V(\widetilde{q},\dot q) \geq \gamma_2 > -\infty$ and so is lower bounded. From (\ref{eq:bnds}), $V(\widetilde{q},\dot q)$ is also radially unbounded and thus its conditions to be a Lyapunov-like function are fulfilled. \\
We now consider the time derivative of $V(\widetilde{q},\dot q)$:
\begin{equation}
\begin{split}
    &\dot V(\widetilde{q},\dot q) = \gamma_1 \left( \frac{1}{2}\dot q^T \dot M \dot q + \dot q^T M \ddot q + \dot q^T (F_K + G)+ \dot{q}^T \hat K \widetilde{q} \right. \\
    &\left. - \frac{e_{\mathrm{a}}^T \frac{\partial}{\partial q}( G_{\mathrm{a}} + K_{\mathrm{au}} \overline q_{u})\dot q}{1 + 2 e_{\mathrm{a}}^T e_{\mathrm{a}}} + \frac{4 e_{\mathrm{a}}^T (G_{\mathrm{a}} + K_{\mathrm{au}} \overline q_{u})\dot q^T e_{\mathrm{a}}}{(1 + 2 e_{\mathrm{a}}^T e_{\mathrm{a}})^2} \right) \\
    & + \frac{2\dot q_{\mathrm{a}}^T (M_{\mathrm{aa}} \dot q_{\mathrm{a}} + M_{\mathrm{au}} \dot q_{\mathrm{u}})}{1 + 2 e_{\mathrm{a}}^T e_{\mathrm{a}}} + \frac{2 e_{\mathrm{a}}^T (M_{\mathrm{aa}} \ddot q_{\mathrm{a}} + M_{\mathrm{au}} \ddot q_{\mathrm{u}})}{1 + 2 e_{\mathrm{a}}^T e_{\mathrm{a}}}\\
    &  + \frac{2e_{\mathrm{a}}^T (\dot M_{\mathrm{aa}} \dot q_{\mathrm{a}} + \dot M_{\mathrm{au}} \dot q_{\mathrm{u}})}{1 + 2 e_{\mathrm{a}}^T e_{\mathrm{a}}} - \frac{8e_{\mathrm{a}}^T (M_{\mathrm{aa}} \dot q_{\mathrm{a}} + M_{\mathrm{au}} \dot q_{\mathrm{u}})e_{\mathrm{a}}^T \dot q_{\mathrm{a}}}{(1 + 2 e_{\mathrm{a}}^T e_{\mathrm{a}})^2}.
\end{split}
\end{equation}
We substitute the closed loop dynamics for action (\ref{eq:controller}) on system (\ref{eq:underactuated}) and simplify algebraically to obtain
\begin{equation}
\begin{split}
    &\dot V(\widetilde{q},\dot q) = \gamma_1 \left( -\dot q^T \hat D \dot q  + \frac{2 e_{\mathrm{a}}^T e_{\mathrm{a}} \dot q^T (G_{\mathrm{a}} + K_{\mathrm{au}} \overline q_{u})}{(1 + 2 e_{\mathrm{a}}^T e_{\mathrm{a}})}  \right. \\
    &\quad \left. - \frac{e_{\mathrm{a}}^T \frac{\partial G_{\mathrm{a}}}{\partial q}\dot q}{1 + 2 e_{\mathrm{a}}^T e_{\mathrm{a}}} + \frac{4 e_{\mathrm{a}}^T (G_{\mathrm{a}} + K_{\mathrm{au}} \overline q_{u})\dot q^T e_{\mathrm{a}}}{(1 + 2 e_{\mathrm{a}}^T e_{\mathrm{a}})^2} \right) \\
    &\quad  + \frac{2\dot q_{\mathrm{a}}^T (M_{\mathrm{aa}} \dot q_{\mathrm{a}} + M_{\mathrm{au}} \dot q_{\mathrm{u}})}{1 + 2 e_{\mathrm{a}}^T e_{\mathrm{a}}}  \\
    &\quad - \frac{2 e_{\mathrm{a}}^T \hat D_{\mathrm{a}} \dot q}{1 + 2 e_{\mathrm{a}}^T e_{\mathrm{a}}} + \frac{2e_{\mathrm{a}}^T (\dot q_{\mathrm{a}}^T \dot C_{\mathrm{aa}} + \dot q_{\mathrm{u}}^T\dot C_{\mathrm{ua}})e_{\mathrm{a}}}{1 + 2 e_{\mathrm{a}}^T e_{\mathrm{a}}} \\
    & \quad - \frac{2 e_{\mathrm{a}}^T \hat K_{\mathrm{a}} e_{\mathrm{a}}}{1 + 2 e_{\mathrm{a}}^T e_{\mathrm{a}}} - \frac{8e_{\mathrm{a}}^T (M_{\mathrm{aa}} \dot q_{\mathrm{a}} + M_{\mathrm{au}} \dot q_{\mathrm{u}})e_{\mathrm{a}}^T \dot q_{\mathrm{a}}}{(1 + 2 e_{\mathrm{a}}^T e_{\mathrm{a}})^2}.
\end{split}
\end{equation}
The first and ninth terms (involving $\hat D$ and $\hat K$) are negative definite. The other terms can be upper bounded by positive functions, as noted in \cite{pustinaFeedbackRegulationElastically2022}, as follows
\begin{equation}\label{eq:bound}
    \dot V(\widetilde{q},\dot q) \leq - 
    \begin{pmatrix}
        ||\dot q|| \\ \cfrac{|| e_{\mathrm{a}} ||}{\sqrt{1 + 2 ||e_{\mathrm{a}}||^2}} 
        % \\ \cfrac{|| e_{\mathrm{a}} || ||\dot q||^{\frac{1}{2}} ||f(q_{\mathrm{u}})||}{\sqrt{1 + 2 ||e_{\mathrm{a}}||^2}}
    \end{pmatrix}^T 
    Q
    \begin{pmatrix}
        ||\dot q|| \\ \cfrac{|| e_{\mathrm{a}} ||}{\sqrt{1 + 2 ||e_{\mathrm{a}}||^2}} 
        % \\ \cfrac{|| e_{\mathrm{a}} || ||\dot q||^{\frac{1}{2}} ||f(q_{\mathrm{u}})||}{\sqrt{1 + 2 ||e_{\mathrm{a}}||^2}}
    \end{pmatrix},
\end{equation}
with the nonzero entries of matrix Q equal to
\begin{gather*}
    Q_{11} = \gamma_1 \lambda_{\textrm{min}}(\hat D) - \frac{\gamma_c}{\sqrt{2}} - 4\lambda_{\textrm{max}}(M) \\
    Q_{12} = Q_{21} = -(\gamma_1 \alpha_{GK} + \sigma_{\textrm{max}}(\hat D_{\mathrm{a}}))\\
    Q_{22} = 2\lambda_{\textrm{min}}(K_{\mathrm{P}} + K_{\mathrm{aa}})\\
    % Q_{33} = - \alpha_K.
\end{gather*}
% with matrix Q equal to
% \begin{equation*}
%     \begin{pmatrix}
%   \gamma_1 \lambda_{\textrm{min}}(\hat D) - \frac{\gamma_c}{\sqrt{2}} - 4\lambda_{\textrm{max}}(M) & -(2 \gamma_1 \alpha_{GK} + \sigma_{\textrm{max}}(\hat D_{\mathrm{a}}))\\
%   -(2\gamma_1\alpha_{GK} + \sigma_{\textrm{max}}(\hat D_{\mathrm{a}})) & 2\lambda_{\textrm{min}}(K_{\mathrm{P}})
% \end{pmatrix},
% \end{equation*}
where $\alpha_{GK} = 2\alpha_G + \alpha_{\partial G} + 2K_{\mathrm{au}} ||\overline q_{u}||$.
Therefore, $\dot V \leq 0$ for $Q > 0$. Using the Sylvester criteria, $Q$ will be positive definite if and only if
\begin{gather}
    \gamma_1 \lambda_{\textrm{min}}(\hat D) - \frac{\gamma_c}{\sqrt{2}} - 4\lambda_{\textrm{max}}(M) > 0,\label{eq:cond1}\\
\begin{split}
    \textrm{det}\,Q &= 2\lambda_{\textrm{min}}(K_{\mathrm{P}} + K_{\mathrm{aa}}) \Big(\gamma_1 \lambda_{\textrm{min}}(\hat D) - \frac{\gamma_c}{\sqrt{2}} - 4\lambda_{\textrm{max}}(M)\Big) \\
    & -  \Big(2\gamma_1\alpha_{GK} + \sigma_{\textrm{max}}(\hat D_{\mathrm{a}})\Big)^2 > 0.
\end{split} \label{eq:cond2}
\end{gather}
These conditions are met for the following $\gamma_1$ and $K_{\mathrm{P}}$
\begin{gather}
    \gamma_1 > \cfrac{\gamma_C + 4 \sqrt{2}\lambda_{\textrm{max}}(M)}{\sqrt{2}\lambda_{\textrm{min}}(\hat D)} \label{eq:gamma}\\
    \lambda_{\textrm{min}}(K_{\mathrm{P}} + K_{\mathrm{aa}}) > \cfrac{\Big(2 \gamma_1 \alpha_{GK} + \sigma_{\textrm{max}}(\hat D_{\mathrm{a}})\Big)^2}{2\Big(\gamma_1 \lambda_{\textrm{min}}(\hat D) - \frac{\gamma_c}{\sqrt{2}} - 4\lambda_{\textrm{max}}(M)\Big)}.\label{eq:kp}
\end{gather}
This last term is verified hypothesis. Thus, combining (\ref{eq:cond1}), (\ref{eq:cond2}), (\ref{eq:gamma}), (\ref{eq:kp}) shows that $\dot V \leq 0$, allowing the application of LaSalle's Principle \cite{hassan2002nonlinear}. Also, from (\ref{eq:bound}), we can see that for $Q > 0$, $\dot V = 0$ if and only if $e_{\mathrm{a}} = 0$ and $\dot q = 0$. Therefore, the trajectories of the closed loop system converge, proving the thesis.
\end{proof} 

Note that this proof can be extended to the case with feedback from $q_{\mathrm{u}}$ in the coupling term if we allow the lower bound on $K_{\mathrm{P}} + K_{\mathrm{aa}}$ to increase with $||q_{\mathrm{u}}||^2$.

% A note is warranted about the above results. Because the terms involved the elasticity and gravity are grouped together, they may appear to have a similar effect on the control system. However, under our hypothesis, this is not true. Gravity can serve as a stabilizing or destabilizing force. But, under the conditions of our hypothesis, the coupled elasticity can only serve as a stabilizing or neutral force. Therefore, properly speaking, the values related to the stiffness that appear in $\alpha_{GK}$, namely  $\alpha_K$ and $\alpha_{\partial K}$, can take on negative values, corresponding to a smaller necessary $K_{\mathrm{P}}$. Intuitively, this is because the coupled elasticity always induces a \textit{decoupled} stiffness as well. This fact is easiest to see in the linear model of Section \ref{subsec:linear}.

The control law (\ref{eq:controller}) is enough to provide global convergence to a unique equilibrium under certain conditions on the stiffness. For the sake of space, we refrain from writing out the proof, but the condition for convergence is dominance of the elasticity over gravity (e.g. $K > - \frac{\partial^2 U_G(q)}{\partial q}$). See \cite{dellasantinaModelBasedControlSoft2023} for details.
% \begin{corollary}
% Under the hypotheses of Theorem 2, if
% \begin{equation*}
%     \cfrac{\partial^2 U_K(\overline q)}{\partial q} > - \cfrac{\partial^2 U_G(q)}{\partial q},
% \end{equation*}
% for all $q_{\mathrm{u}}$, then the closed loop dynamics for action (\ref{eq:controller}) on system (\ref{eq:underactuated}) converges to a unique global asymptotically stable equilibrium. 
% \end{corollary}
% \begin{proof} Taking the energies corresponding to stiffness and gravity, we have
% \begin{equation}
%     P = \frac{1}{2}\overline{q}^T \hat K \overline{q} + U_G(q).
% \end{equation}
% % \begin{equation}
% %     P = U_K(\overline q) + U_G(\overline q).
% % \end{equation}
% Because of Theorem 2, we know that the unactuated variables converge to $q_{\mathrm{u,eq}}$, a solution to the equation $K_{\mathrm{au}}q_{\mathrm{u}} + G_{\mathrm{u}}(q) = 0$. Because this function is the gradient of $P$, the solution $q_{\mathrm{u,eq}}$ is an extremum of $P$. To check if the point is unique, we must have that the Hessian of $P$, $\frac{\partial^2 U_K(\overline q)}{\partial q} + \frac{\partial^2 U_G(q)}{\partial q}$, is positive definite, which is taken as hypothesis.
% \end{proof}

%%%%%%%%%%%%%%%%%%%%%%%%%%%%%%%%%%%%%%%%%%%%%%%%%%%%%%%
%%%%%%%%%%%%%%%%%%% Simulations %%%%%%%%%%%%%%%%%%%%%%%
%%%%%%%%%%%%%%%%%%%%%%%%%%%%%%%%%%%%%%%%%%%%%%%%%%%%%%%
\section{Simulations}
We test our regulator on the model system presented in Fig. \ref{fig:example}b. For simplicity, we implement the linear stiffness model from Section \ref{subsec:linear}. The parameters for the model are as follows. Each link has length $l = 1m$ and mass $m = 0.1kg$. Gravity is $g = 9.81 m/s^2$, the decoupled stiffness of each joint is $k_{\mathrm{d}} = 1.5 Nm$, the coupled stiffness between the second and third joints is $k_{\mathrm c} = 2 Nm$, the damping is $d = 0.5$, and the control gains are $K_{\mathrm{P}} = 1$ and $K_{\mathrm{D}} = 0.5$. We seek to track a reference signal $q_{\mathrm{a}} = [-1.1 \quad 0.7]^T$. Trajectories and torques are shown in Fig. \ref{fig:sim}. Our controller is able to successfully track the reference, while the standard PD controller fails to eliminate steady state error in the collocated, coupled DOF $q_{\mathrm{a},2}$.
\begin{figure}[t]
\centering
\includegraphics[width=0.5\textwidth]{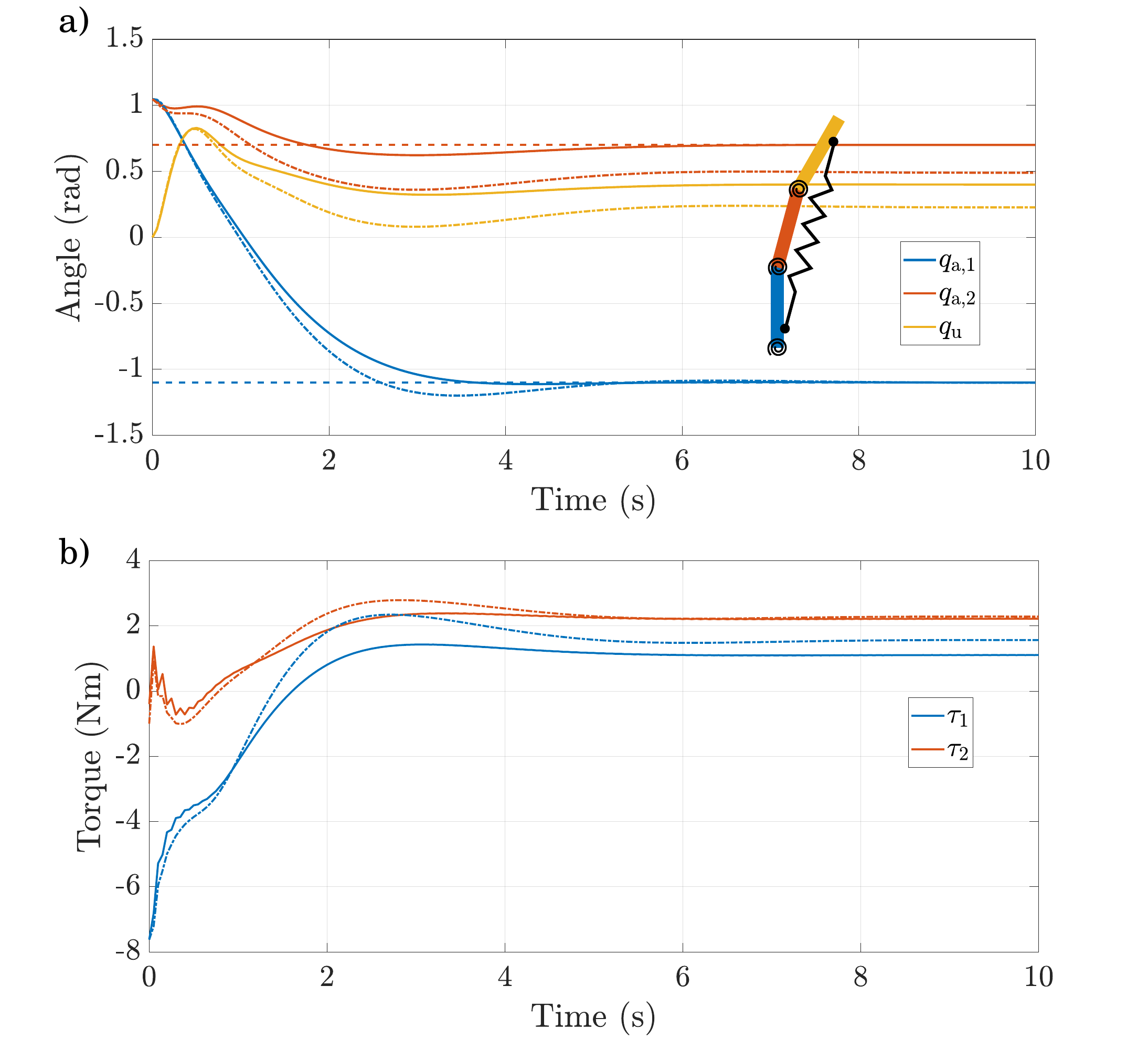}
\caption{A simulated demo on a simplified finger model under gravity. Solid lines designate our regulator whereas dash-dot lines designate an identical PD regulator without the coupling compensation. a) Shows the trajectories with an inset graphic of the model. b) Shows the control torques.}\label{fig:sim}
\end{figure}

We also simulate the "flipper" system shown in Fig. \ref{fig:example}c. We perform both a regulation and a trajectory tracking task with a single actuated coupled DOF and compare our controllers with the standard soft robot PD regulator. We observe in Fig. \ref{fig:flipper} that our regulators are able to successfully achieve set point regulation and their performance on trajectory tracking is reasonable. The naive approach performs much worse as it does not account for coupled elasticity. 

\begin{figure}[t]
\centering
\includegraphics[width=0.5\textwidth]{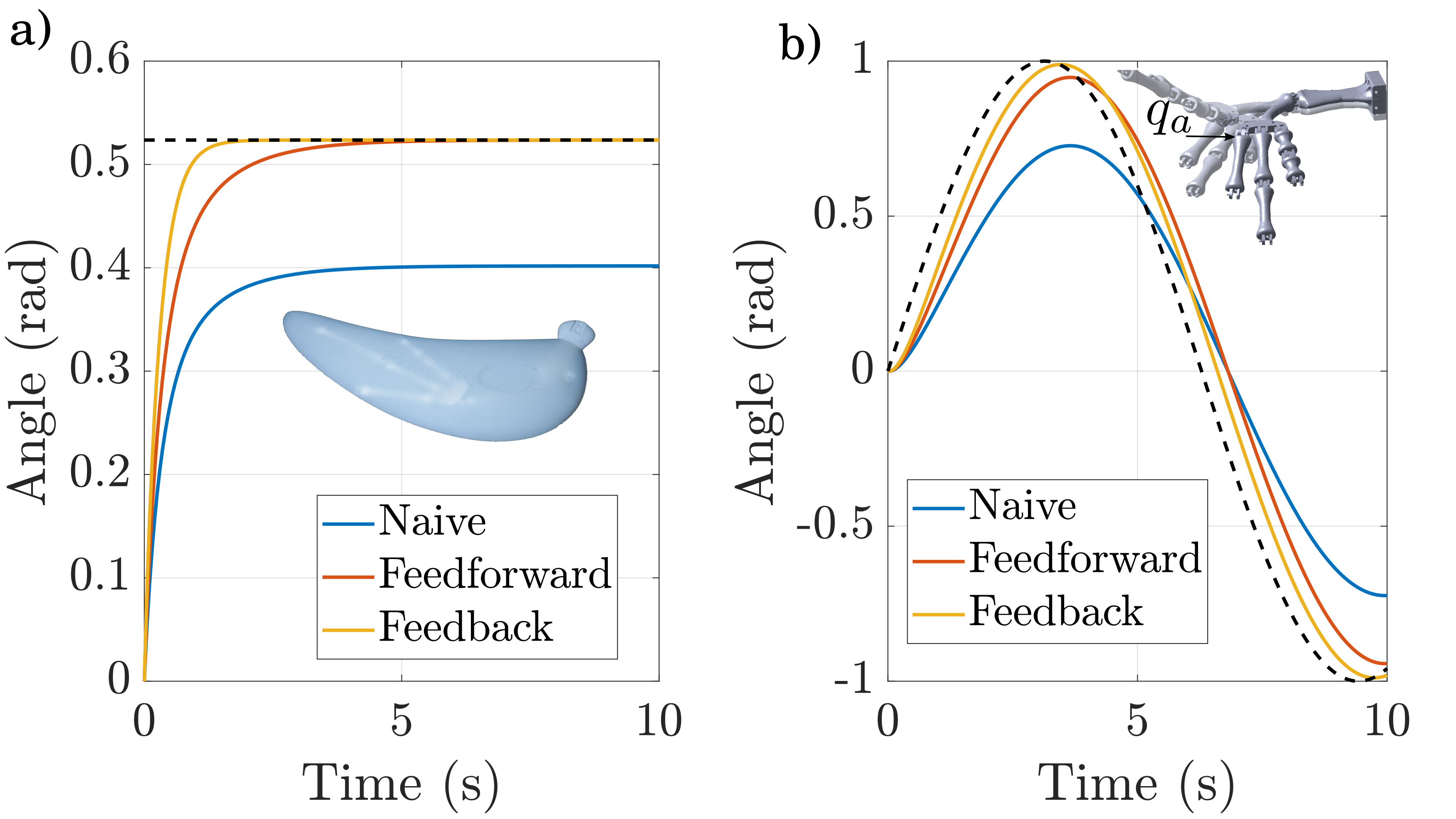}
\caption{A simulated demo of our controllers on a simplified "flipper" model. a) Shows the trajectories with an inset image of the real-life version of the idealized system. b) Shows snapshots taken from the simulated model along the trajectory.}\label{fig:flipper}
\end{figure}

%%%%%%%%%%%%%%%%%%%%%%%%%%%%%%%%%%%%%%%%%%%%%%%%%%%%%%%
%%%%%%%%%%%%%%%%%%% Results %%%%%%%%%%%%%%%%%%%%%%%%%%%
%%%%%%%%%%%%%%%%%%%%%%%%%%%%%%%%%%%%%%%%%%%%%%%%%%%%%%%
\section{Experiments}
We produced a simple robot to validate our above models and regulators, and to explore the potential benefits of using such coupled elastic systems. The robot can be seen in Fig. \ref{fig:example}b. Briefly, the design is as follows. We 3D print two links that are embedded in an elastic matrix via silicone injection molding \cite{bellInjectionMoldingSoft2022}. One link is attached to a Dynamixel XM430-W350-T servo motor and the other contains a magnetic encoder (design from \cite{liuModularBioinspiredRobotic2023}) and is unactuated. The links are attached to a common fixed base.

\subsection{Model Validation}
To test the models from Section \ref{sec:model}, we perform a series of experiments wherein the motor is commanded to go to a specific state and the free link is either free to move or is held at a fixed location. States and inputs are collected to calculate the coupling force based on each of the models. Because each of the coupling models is linear in the stiffness parameter $k$, we can perform a simple ordinary least squares (OLS) analysis for identification of that parameter. Based on OLS we can also easily get a notion of the goodness of fit of the model based on the ratio of explained variance ($R^2$). For our dataset ($N=90$), the Neo-Hookean model performed the best with $R^2 = 0.9$ followed by the linear model with $R^2 = 0.89$, the distance-based coupling with $R^2 = 0.89$ and the rejection-based coupling with $R^2 = 0.75$.

\begin{figure}[t]
\centering
\includegraphics[width=0.5\textwidth]{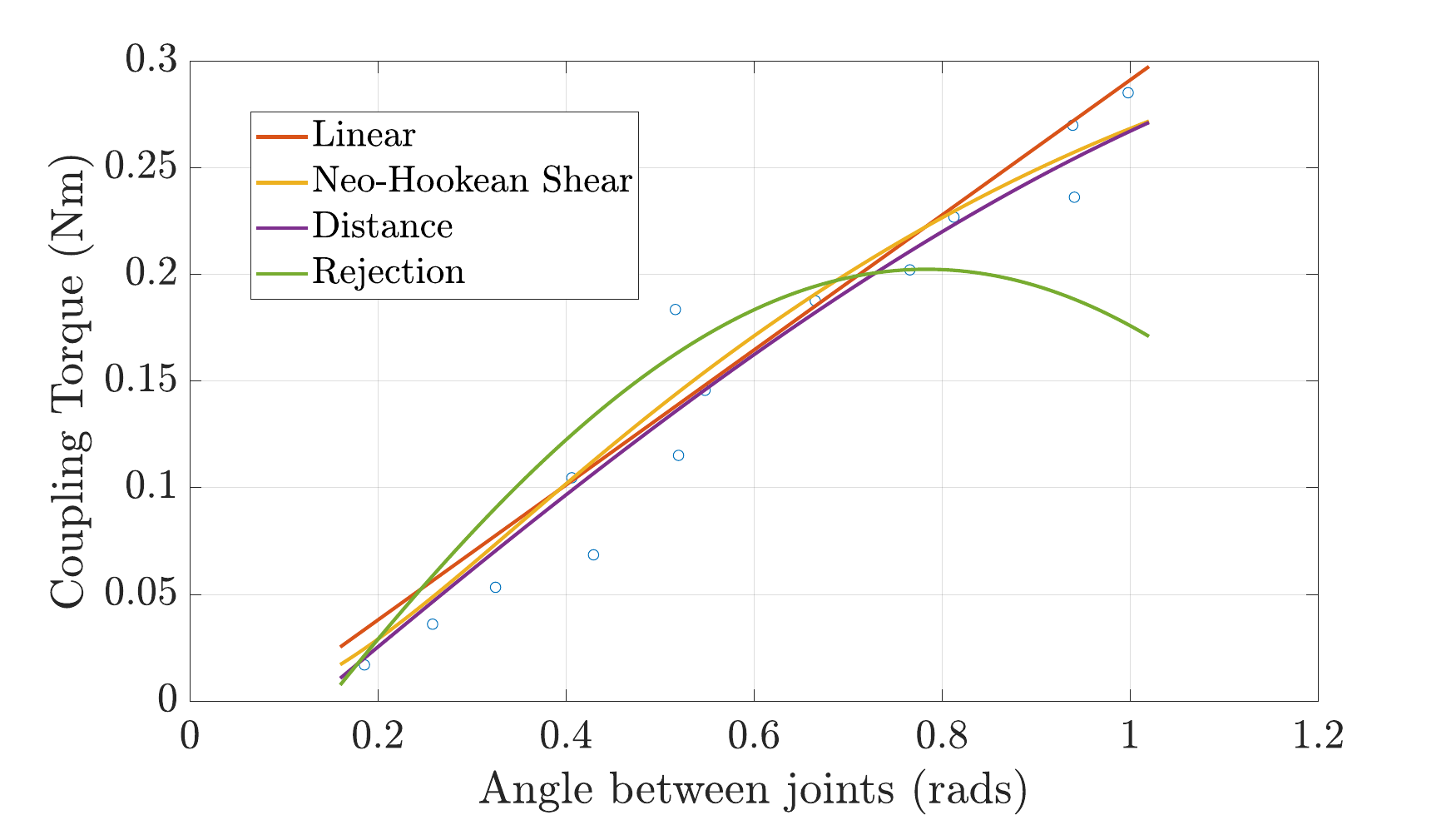}
\caption{Models comparison on a subset of the data.}\label{fig:model_comp}
\end{figure}

\subsection{Coupled Force Control}
Coupled elastic systems such as those shown in Fig. \ref{fig:example} instantiate a form of embodied intelligence, similar to series elastic actuators, that allows the estimation of forces based on the state of the system and a model of the elastic coupling. To test this concept, we use the hardware shown in Fig. \ref{fig:example}b. For all hardware experiments, we neglected gravity given the low mass of the elements of the robot. Given the good performance of the linear coupling model, we use it for simplicity. We implement a simple PID control loop over the approximate force at the end of link 2 as estimated from the elastic coupling. We show that a PID controller with coupling compensation is able to converge to steady state torques with relatively low error, even without direct feedback or force sensing. The controller used is \begin{multline}
    \tau = K_{\mathrm{P}}(F_k(q) - F_{\mathrm d}) - K_{\mathrm{D}} \dot q \\+ K_{\mathrm I} \int_{0}^{t} (F_k(q) - F_{\mathrm d})\,dt + F_k(\overbar q) + G_{\mathrm{a}}(q),
\end{multline}
where $F_k(q)$ is the estimate of the current force on the end of the link based on the coupling torque, $F_{\mathrm d}$ is the desired force, $e_{\mathrm f} = F_k(q) - F_{\mathrm d}$, and $F_k(\overbar q)$ is the same compensation that was discussed in our collocated regulator. For our particular implementation, we neglect gravity ($G_{\mathrm{a}}(q) = \overbar 0$).To test this controller, we actuate our robot until it makes contact with a Robotous RFT60-HA01 force-torque sensor, which we use to collect ground truth. After calibration, results are shown in Fig. \ref{fig:force}. We observe from Fig. \ref{fig:force}b that the errors are relatively low; under $15\%$. Stills from a force control demo are shown in Fig. \ref{fig:snaps}.
\begin{figure}[t]
\centering
\includegraphics[width=0.45\textwidth]{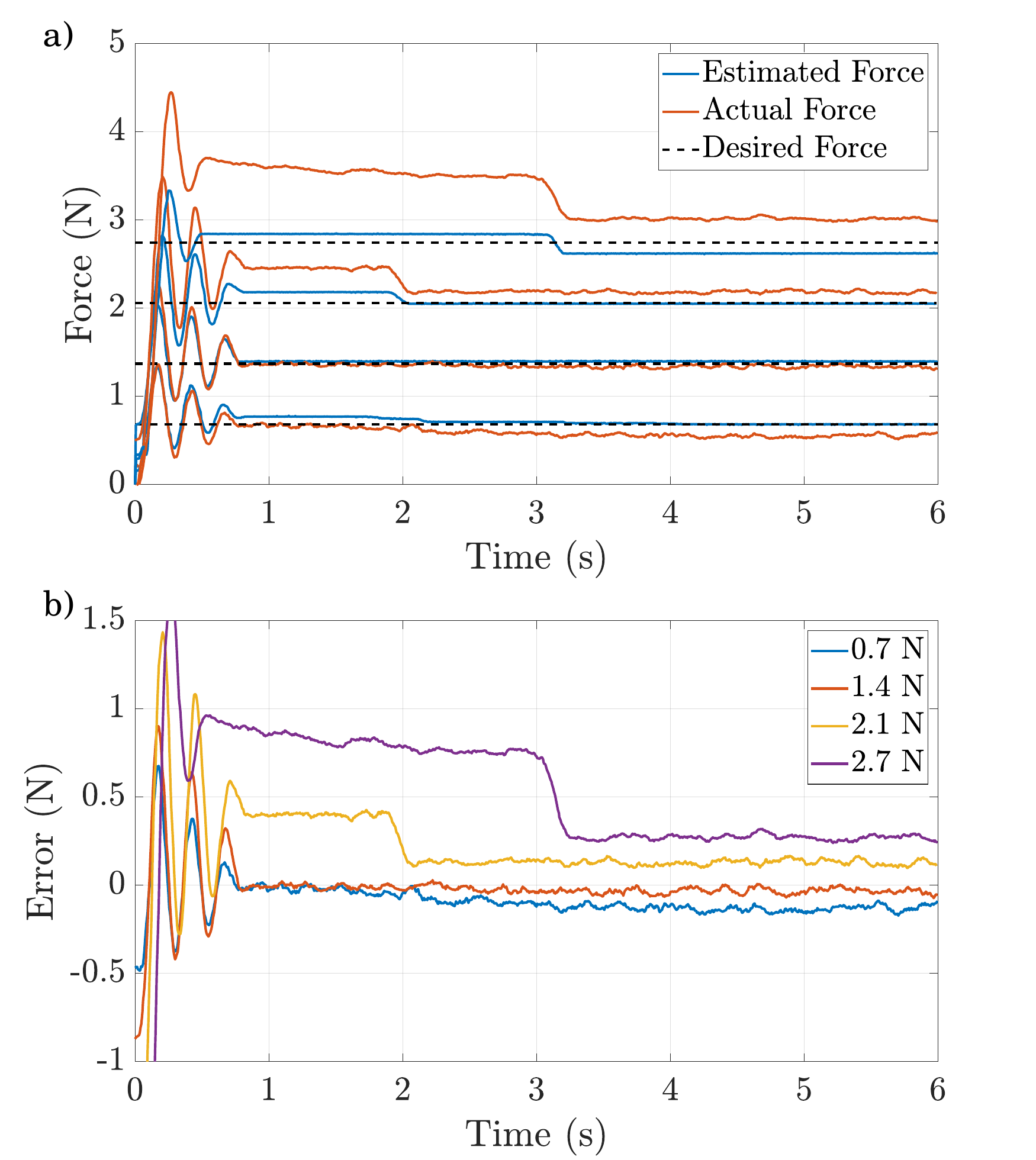}
\caption{Results of sensorless force control on our hardware. a) Estimated force (based on the coupling model), actual force (measured by the force-torque sensor), and desired force for four different desired forces. b) Errors for the force control experiments. }\label{fig:force}
\end{figure}

\begin{figure*}[t]
\centering
\includegraphics[width=0.95\textwidth]{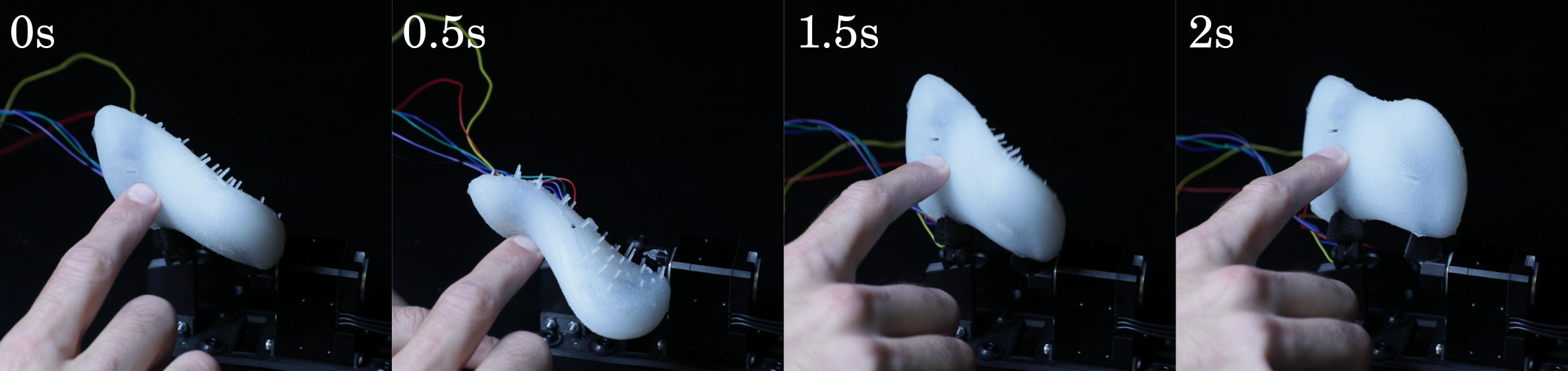}
\caption{Stills from a force control experiment. The controller is set to exert 2.7N of force on the finger.}\label{fig:snaps}
\end{figure*}

\subsection{Regulation}
A set point was commanded and the state of the robot observed. The unactuated joint of the robot was disturbed during the trials. Trajectories and torques for controllers with feedforward compensation, feedback compensation, and without coupling compensation are shown in Fig. \ref{fig:regulator}. Both compensation cases are able to successfully reach the set point. However, when disturbed, the case with feedforward compensation can still be moved off the set point by large disturbances in the unactuated DOF, whereas feedback compensation fully rejects these disturbances. 

\begin{figure*}[t]
\centering
\includegraphics[width=0.95\textwidth]{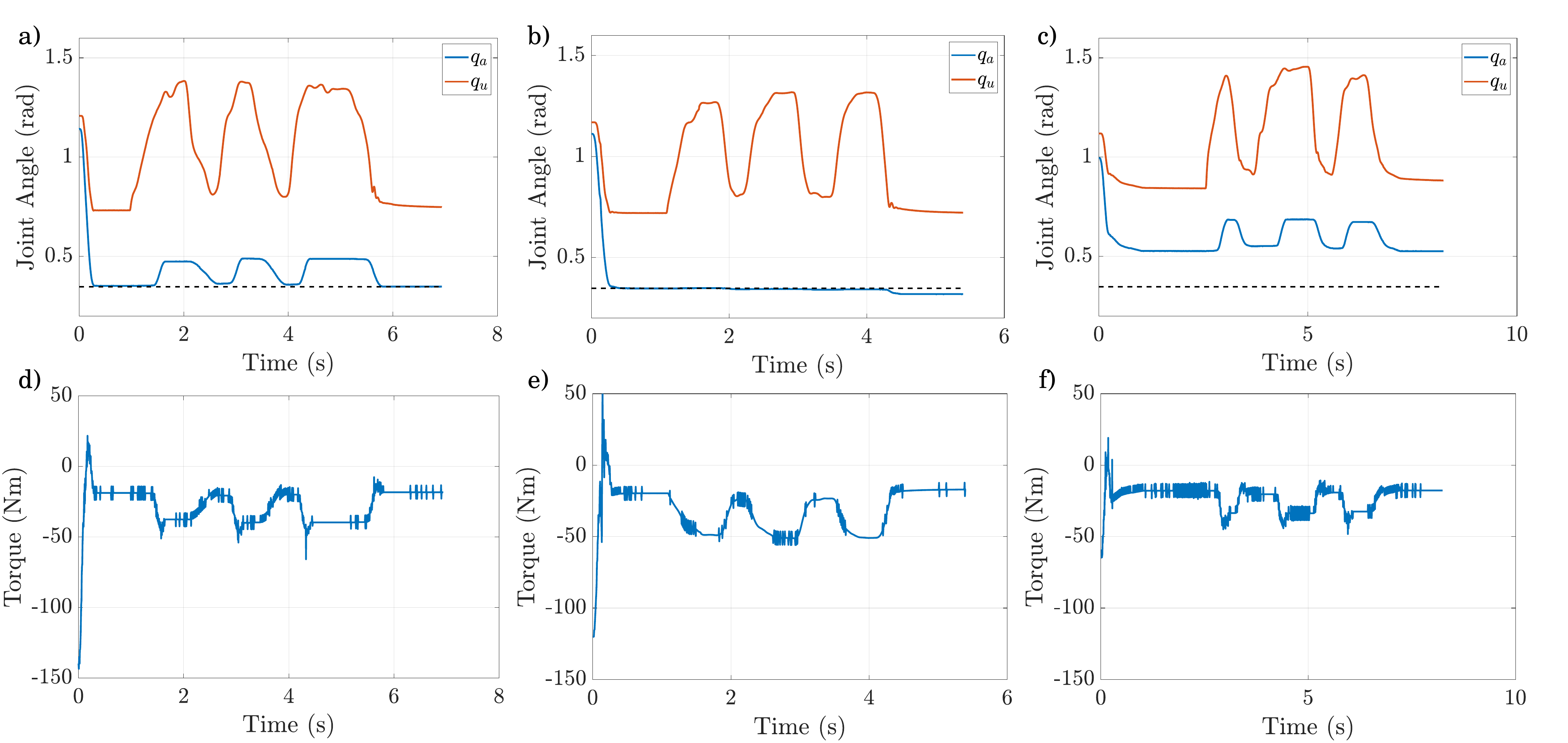}
\caption{Results on hardware with coupled stiffness. Spikes in activity correspond to perturbations on the unactuated joint. Trajectories a) with feedforward compensation, b) with feedback compensation, c) without compensation. Torques d) with feedforward compensation, e) with feedback compensation, and f) without compensation.}\label{fig:regulator}
\end{figure*}

%%%%%%%%%%%%%%%%%%%%%%%%%%%%%%%%%%%%%%%%%%%%%%%%%%%%%%%
%%%%%%%%%%%%%%%%%%% Discussion %%%%%%%%%%%%%%%%%%%%%%%%
%%%%%%%%%%%%%%%%%%%%%%%%%%%%%%%%%%%%%%%%%%%%%%%%%%%%%%%
\section{Discussion}
The models presented in this work proved effective at modeling the coupling of the real system. Interestingly, the linear model proved very effective. In fact, given that the Neo-Hookean shear model was only marginally more effective, from the control engineer's perspective the linear model seems to be the best choice for its simplicity. Further analysis using more complex continuum mechanics and/or FEA along with experiments on other hardware systems are necessary to explain this finding. We note that the rejection-based model performed relatively poorly compared to the others. As seen in Fig. \ref{fig:model_comp}, this is due to poor performance at high angular differences, which is in turn due to the periodicity of the model itself, inherited from the geometry of the rejection. Indeed, when restricting the model to within this threshold, it performs similarly to the others. As is, this model is not a good candidate due to this restrictiveness. 
% Although it could potentially be extended by aggregating piecewise C2 functions, the motivation for doing so is unclear given the effectiveness of the simpler linear and hyperelastic strain models. 

One of the most exciting contributions in this work is the force control implementation using the coupling model as a "sensor." This opens up a new avenue of research in terms of using elastically coupled degrees of freedom for force control in applications where accuracy is not needed (e.g. in this work we found errors up to $15\%$). This approach is inspired by and similar in spirit to series elastic actuators with intrinsic force sensing \cite{prattSeries1995}. This finding also demonstrates a new instance of the "embodied intelligence" principle \cite{iidaTimescales2023}, in which the intelligence of the mechanical structure is leveraged to enable new functionality. In future work, we will operationalize this approach in more complex systems.

A core limitation of this work is that the regulator does not verify exponential convergence (see the tracking trials in Fig. \ref{fig:flipper}b). Indeed, an open problem in the soft robot control literature is the lack of tracking controllers for general underactuated soft robots \cite{dellasantinaModelBasedControlSoft2023}. This will be a topic of future work.

In conclusion, we presented several simple modeling candidates for coupled elastic systems and evaluated them on harware. We also proposed a new control law for underactuated, coupled elastic systems and proved stability. The control law was evaluated in simulation on several systems as well as on hardware. Finally, we demonstrate a novel use case for coupled elastic systems as force sensors.

\section*{Acknowledgment}
We would like to thank Pietro Pustina for his beautiful proof and for his generous feedback in adapting it to the topic of this work. This work was done with the support of National Science Foundation EFRI program under grant number 1830901 and the Gwangju Institute of Science and Technology.

\clearpage

\bibliographystyle{ieeetran}
\bibliography{ICRA2024}

\end{document}